\documentclass{article} 
\usepackage{times}
\usepackage[final]{nips_2017}
\usepackage{hyperref}
\usepackage{url}
\usepackage{amsfonts}
\usepackage{graphicx}
\usepackage{amsthm}
\usepackage{amsmath}
\usepackage{multirow}
\theoremstyle{definition}
\newtheorem{definition}{Definition}
\newtheorem{proposition}{Proposition}

\title{Differentially Private Distributed Learning for Language Modeling Tasks}

\author{Vadim Popov, Mikhail Kudinov, Irina Piontkovskaya, Petr Vytovtov \& Alex Nevidomsky
\\
Samsung R\&D Institute Russia\\
Moscow, Russia \\
\texttt{v.popov@partner.samsung.com,m.kudinov@samsung.com,} 
\\
\texttt{p.irina@samsung.com,p.vytovtov@partner.samsung.com,} 
\\
\texttt{a.nevidomsky@samsung.com}
}

%


\begin{document}

\maketitle

\begin{abstract}
One of the big challenges in machine learning applications is that training data can be different from the real-world data faced by the algorithm. In language modeling, users' language (e.g. in private messaging) could change in a year and be completely different from what we observe in publicly available data. At the same time, public data can be used for obtaining general knowledge (i.e. general model of English). We study approaches to distributed fine-tuning of a general model on user private data with the additional requirements of maintaining the quality on the general data and minimization of communication costs. 
We propose a novel technique that significantly improves prediction quality on users' language compared to a general model and outperforms gradient compression methods in terms of communication efficiency. The proposed procedure is fast and leads to an almost $70\%$ perplexity reduction and $8.7$ percentage point improvement in keystroke saving rate on informal English texts. We also show that the range of tasks our approach is applicable to is not limited by language modeling only. Finally, we propose an experimental framework for evaluating differential privacy of distributed training of language models and show that our approach has good privacy guarantees.
\end{abstract}

\section{Introduction}
Two common problems arising after deployment of a machine learning model on user devices are discrepancy between training data and actual data stored on user devices, and the need of regular model updates. In the case of language modeling, it corresponds to the difference between language and style of the training corpus mined in the Internet and messages of the user, which account for most of the text generated on the device. Even if the training corpus includes a substantial part of informal texts (tweets, forum threads, etc.), real user data can be very different. This is a challenge for word prediction algorithms in software keyboard applications. The most general approach to improvement of customer experience in typing is integrating a separate user language model trained on device in an on-line fashion. In the simplest case it is a smoothed n-gram (e.g. Kneser-Ney n-gram model (\cite{GoodmanABitOfProgressInLanguageModeling})).  

In \cite{EfficientTransferLearningSchemesforPersonalizedLanguageModelingusingRecurrentNeuralNetwork} continuously learned personalized language model based on LSTM was proposed but as far as each user generates only a small  portion of textual data, such data by itself cannot be used for updates of the general model. Thus, for a model update, a collection of potentially sensitive data from many users is needed. As shown in \cite{CommunicationEfficientLearningofDeepNetworksfromDecentralizedData}, collecting data for training may be avoided. We propose a similar approach for distributed fine-tuning of language models on private data. In this sense our method can be considered as ``federated fine-tuning`` but we prefer to take more traditional term. In this setting we start with a language model trained on a large text corpus representing the general language. This model $G$ will be updated continuously on user devices but with an additional requirement that the model must not go too far from the general language model, i.e. we don't overfit on user data.

We pursue two goals: 1) to develop an algorithm of distributed fine-tuning that is fast, communication efficient and doesn't need collecting sensitive user data; and 2) to prevent the language model from forgetting ``general English``. Besides, we provide analysis of possibility of privacy violation in our model. (\cite{DeepModelsUndertheGANInformationLeakagefromCollaborativeDeepLearning}) demonstrated an attack on distributed training algorithm leading to information leakage. This means that privacy analysis in necessary for such algorithms.

Our main contributions are: 1) we propose an efficient procedure of distributed fine-tuning of language models immune to the problem of catastrophic forgetting (\cite{CatastrophicForgettinginConnectionistNetworks}), 2) we provide experimental evaluation of on-device training time, communication costs and convergence rates of the general language model in realistic conditions, 3) we demonstrate that distributed fine-tuning is applicable to various model architectures (RNNs, CNNs, feedforward neural networks) and tasks (language modeling, image classification), 4) we compare two most popular strategies of improving communication efficiency in the context of distributed learning, and 5) we propose an experimental framework for evaluation of differential privacy of distributed training of language models, and using this framework, we evaluate privacy guarantees of our approach.

In our research we are focused on improvement of keystroke saving rate (see section \ref{sec:KSS_rate}) because this metric reflects customer typing experience more directly than perplexity or BLEU. We use LSTM architecture for our language model as described in \cite{RecurrentNeuralNetworkRegularization} and evaluate on-device training time for this architecture. We show that the on-device training time is reasonably small, thus demonstrating the feasibility of the whole approach.

\section{Distributed fine-tuning of language models}

As usual, our task is to predict the next word $w_N$ given a sequence of words $w_1\dots w_{N-1}$. If the prediction algorithm of a software keyboard application is based on a language model with low perplexity on the test data, the application provides a reasonably sorted list of input candidates. Of course, the test data should be drawn from the same distribution as the user data. In our case we also want to have only one, continuously improving model on a user device. As far as the user can always switch to the general English, we have to prevent the model from overfitting on the texts written on the device, or \textit{catastrophic forgetting} (\cite{CatastrophicInterferenceinConnectionistNetworksTheSequentialLearningProblem,AnEmpiricalInvestigationofCatastrophicForgettinginGradientBasedNeuralNetworks,Overcomingcatastrophicforgettinginneuralnetworks}).

Our approach can be summarized as follows (Figure \ref{pic:approach_layout}):
0) At the first stage we have an initial language model $G_0$ (at every step t it will be updated to $G_t$) trained on a large corpus of standard English; 1) As soon as a user inputs sufficient volume of text, the latest version of $G_t$ is sent from the server to provide updates, and fine-tuning starts on the device leading to the model $\bar{G}^i_t$; 2) When the training is finished the model $\bar{G}^i_t$ is sent back to the server; 3) Every time the updated models $\bar{G}^i_t$ are received from $K$ different users, a round of model update is run resulting in the model $G_{t+1}$

\begin{figure}[t]
\center{  \includegraphics[scale=0.5] {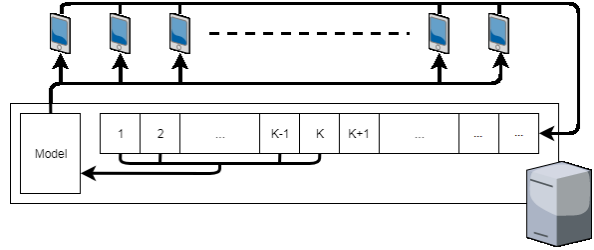}}
\caption{Overview of the approach. The current model is updated on devices and updates $\bar{G}^i_t$ from users are stored in a queue. Every $K$ elements $\bar{G}_t^i$ of the queue are used for one round of averaging. After each round the server model $G_{t+1}$ is sent to the next $K$ elements.}
\label{pic:approach_layout}
\end{figure}

\subsection{Learning without forgetting}\label{sec:DeviceLwF}
In its original formulation (\cite{LearningwithoutForgetting}), the problem of learning without forgetting (LwF) consists in re-training of existing model $\Theta$ on new data such that its performance on the old data does not degrade. 

More formally, suppose we have a classifier with a set of parameters $\Theta$ trained and tested on a dataset $\mathbf{D}=\{\mathbf{Tr}, \mathbf{Ts}\}$ where $\mathbf{Tr}$ and $\mathbf{Ts}$ are train and test sets accordingly. Let $\mathbf{{D}^{*}}=\{\mathbf{Tr}^{*}, \mathbf{Ts}^{*}\}$ be some new dataset. Our goal is to update the parameters $\Theta$ with dataset $\mathbf{{D'}}=\{\mathbf{Tr}^{*}, \mathbf{Ts}\cup \mathbf{Ts}^{*}\}$ i.e. we have to provide the best performance on old and new types of data having only training data of the new type.

In contrast, joint training (\cite{MultitaskLearning}) assumes a model update with access to the both datasets: $\mathbf{{D'}}=\{\mathbf{Tr}\cup \mathbf{Tr}^{*}, \mathbf{Ts}\cup \mathbf{Ts}^{*}\}$. 

As we want to avoid sending user data to the server, classical joint training is impossible. On the other hand, LwF seems promising. In this case we send the user a current instance of the general language model $G_t$ with weights $\theta_g$ and fine-tune it producing the model $\theta_u$, while $\theta_g$ is used for generating predictions for regularization. The resulting loss at step $t$ and true word $\mathbf{w}_t$ can be calculated as follows:

\begin{equation}\label{eq:loss_t}
l_t(\theta_u) = -\sum_{w \in W}y^*_{t,w} \log p(w|\theta_u),
\end{equation}
where
\begin{equation}\label{eq:y_lwf}
y^*_{t,w} = \lambda\mathbf{1}\{\mathbf{w}_t=w\} + (1-\lambda)p(w|\theta_g)
\end{equation}

A similar approach is taken in \cite{GenerativeKnowledgeTransferForNeuralLanguageModels} where predictions of a basic model (in this case $\theta_g$) are taken as \textit{soft labels}.

\subsection{Training with rehearsal}\label{sec:DeviceRehearsal}
Minimizing loss in (\ref{eq:loss_t})--(\ref{eq:y_lwf}) is equivalent to minimizing Kullback-Leibler divergence $\mathcal{L}(\theta_u) = KL\left(\mathbb{P}_{gr} \middle\| \mathbb{P}_u \right)$ with respect to parameters $\theta_u$ of $\mathbb{P}_u$ where density of $\mathbb{P}_{gr}$ is given by: 

\begin{equation}\label{eq:interpolated_density}
P(x) = \lambda P_{Tr^*}(x) + (1-\lambda)P(x|{\theta_g})
\end{equation} 

In (\ref{eq:interpolated_density}) $P_{Tr^*}(x)$ stands for the real distribution on a user device and $P(x|\theta_g)$ is a probability given by the model of ``general English`` $\theta_g$. It suggests that instead of optimizing $\mathcal{L}(\theta_u)$ we can simply add data from $\mathbf{Tr}$ to $\mathbf{Tr^*}$ to obtain the $(1-\lambda)$ portion. This approach, called \textit{random rehearsal}, was presented in \cite{CatastrophicForgettingRehearsalAndPseudorehearsal}.

In practice in the case of fine-tuning with rehearsal a portion of the general English training corpus (standard English corpus) must be sent to the user device. Volume of typical user data generated on device is of the order of tens of kilobytes per month, and the size of the training data sent to the device will be of the same order. Overall, random rehearsal is more efficient, because there is no need to calculate soft labels.

\subsection{Server-side model update}\label{sec:ServerUpdate}
The server-side part of the solution must aggregate models $\bar{G}^i_t$ from many users and use them to update the general model $G_t$. We took simple model averaging as a baseline solution and transfer learning (\cite{Deeplearningofrepresentationsforunsupervisedandtransferlearning, RecurrentNeuralNetworkTrainingwithDarkKnowledgeTransfer})  as an alternative approach.

In the case of transfer learning  we optimized cross-entropy function (\ref{eq:loss_t}), with $y^*_{i}$ given by an average prediction from $N$ aggregated models $\theta_u^k$:
\begin{equation}\label{eq:y_star}
y^*_{i} = \frac{1}{N}\sum_{k=1}^N p(w_{i}|\theta_u^k)
\end{equation}

Just as in the case of on-device training, transfer learning-based approach is rather inefficient in terms of time and memory because predictions from all models are needed.

\subsection{Keystroke saving rate}\label{sec:KSS_rate}
Keystroke saving rate (KSS) (\cite{KeyStrokeSavingRate}) is defined as a relative decrease in the number of characters the user has to type, given suggestions from the software keyboard:
\begin{equation}
KSS = \frac{N_{total} - N_{typed}}{N_{total}} \times 100\%,
\end{equation}
where $N_{total}$ is the total number of non-space characters in the typed text and $N_{typed}$ is the  number of characters user still had to type until the correct suggestion was presented. In our experiments we used top-3 suggestion lists.

From the definition above one can see that KSS is better for customer experience assessment compared to perplexity. Besides, perplexity measure underestimates out-of-vocabulary (OOV) words. In the presence of OOV words perplexity is ill-defined, so all OOV words must be removed from the test set. It makes a direct comparison of models with different vocabularies impossible, which is impractical. Finally, our experiments have demonstrated that a small decrease in perplexity may not correspond to KSS improvement and doesn't lead to any practical result. Nevertheless, our method demonstrates considerable perplexity reduction as well.

\subsection{Model fine-tuning experiments}\label{sec:Main_Experiments}
The goal of our experiments was to find the most efficient pipeline to distributed fine-tuning of language models. We compared several approaches for client-side and server-side model updates. In accordance with the problem statement we assumed a substantial difference between the real-life user corpus and the standard English corpus used for initial training, so we took Twitter and Wikipedia corpora for the user and standard English corpora correspondingly.

The standard English train dataset contained approximately 30M tokens. The hyperparameters of the model were initially tuned on the Standard English validation set of 3.8M tokens. The user train dataset contained approximately 1.7M tokens. Updated models were tested on subsets of the Twitter and Wikipedia corpora containing 200k and 170k tokens correspondingly. Comparison between the random rehearsal and LwF training methods were carried out on a single node.

For our experiments we used LSTM architecture from \cite{RecurrentNeuralNetworkRegularization} with 2x650 LSTM layers, a vocabulary size of 30k, dropout $0.5$, minibatch size $20$, BPTT steps $35$. The initial general English model was trained in $39$ epochs.

We report KSS and perplexity on both the standard English test set and the user data test sets. In the case of the standard English test set KSS was calculated on a subset of 200 sentences (3600 tokens). The initial general English model had a perplexity of $100.1$ and $67.9\%$ KSS rate on the Standard English test and perplexity $336.0$ and $49.7\%$ KSS rate on the user data test set. So, the model experienced a considerable $18.2\%$ drop in performance on the user data test set.

\begin{table}[]
\centering
\caption{Random rehearsal vs learning without forgetting. For LwF mode $\lambda$ is a coefficient of the ground truth probability distribution in the loss function (\ref{eq:loss_t})-(\ref{eq:y_lwf}). For random rehearsal mode $\lambda$ is a portion of user training data in on-device training.}
\begin{tabular}{|l||l|l|l|l|l|}
\hline
\multirow{2}{*}{Method}                  & \multicolumn{2}{c|}{\begin{tabular}[c]{@{}c@{}}Standard English dataset \\ (Wikipedia)\end{tabular}} & \multicolumn{2}{c|}{\begin{tabular}[c]{@{}c@{}}User dataset \\ (Twitter)\end{tabular}} & \multicolumn{1}{c|}{\multirow{2}{*}{Av. PPL}} \\ \cline{2-5}
                                         & PPL                                          & KSS, \%                                       & PPL                                       & KSS, \%                                    & \multicolumn{1}{c|}{}                         \\ \hline
Initial server model                     & 100.1                                        & 67.9                                          & 336.0                                     & 49.7  & 192.6                                         \\ \hline
Random rehearsal, $\lambda=1/4$            & 121.3                                        & 66.3                                              & 127.9                               & 56.9                                            & 124.8                                         \\ \hline
Random rehearsal,$\lambda=1/2$             & 131.1                                        & 65.9                                             & 109.7                                & 58.3                                            & \textbf{119.1}                                         \\ \hline
Random rehearsal,$\lambda=3/4$             & 149.0                                        & 64.8                                             & 99.7                                 & 59.0                                            & 119.9                                         \\ \hline
Learning without forgetting, $\lambda=1/4$ & 128.4                                        & 66.0                                             & 162.8                                & 54.9                                            & 146.0                                         \\ \hline
Learning without forgetting, $\lambda=1/2$ & 147.0                                        & 64.9                                            & 121.7                                 & 57.5                                            & 132.7                                         \\ \hline
Learning without forgetting, $\lambda=3/4$ & 186.5                                        & 63.1                                              & 101.1                               & 59.2                                            & 133.9                                         \\ \hline
On-device re-training, $\lambda=1$         & 265.1                                        & 60.2                                              & 93.4                                & 59.7                                            & 150.8                                         \\ \hline
\end{tabular}
\label{tab:LwF_vs_Rehearsal}
\end{table}

\begin{table}[t]
\centering
\caption{Averaging vs transfer learning for server-side model update.}
\begin{tabular}{|l||l|l|l|l|l|}
\hline
\multirow{2}{*}{Method}                         & \multicolumn{2}{c|}{\begin{tabular}[c]{@{}c@{}}Standard English dataset \\ (Wikipedia)\end{tabular}} & \multicolumn{2}{c|}{\begin{tabular}[c]{@{}c@{}}User dataset \\ (Twitter)\end{tabular}} & \multicolumn{1}{c|}{\multirow{2}{*}{Av. PPL}} \\ \cline{2-5}
                                                & PPL                                          & KSS, \%                                       & PPL                                       & KSS, \%                                    & \multicolumn{1}{c|}{}                         \\ \hline
Initial server model                            & 100.1                                        & 67.9                                          & 336.0                                     & 49.7                                       & 192.6                                         \\ \hline
TL on generated data (1-cycle)  & 109.2                                        & 67.2                                          & 259.7                                     & 50.8                                       & 174.4                                         \\ \hline
TL on generated data (5-cycles) & 112.3                                        & 67.0                                          & 246.0                                     & 51.2                                       & 171.6                                         \\ \hline
TL on real data                 & 108.7                                        & 67.2                                          & 261.2                                     & 50.7                                       & 174.6                                         \\ \hline
Model averaging (1 round)                       & 102.8                                        & 67.7                                          & 233.8                                     & 51.9                                       & \textbf{160.3}                                \\ \hline\hline
Model averaging (300 rounds)                    & 105.5                                        & 67.3                                          & 109.3                                     & 58.4                                       & \textbf{107.5}                                         \\ \hline
\end{tabular}
\label{tab:AVG_vs_TL_ppx}
\end{table}

\begin{figure}[h]
\begin{minipage}[h]{0.49\linewidth}
\center{\includegraphics[width=1.0\linewidth]{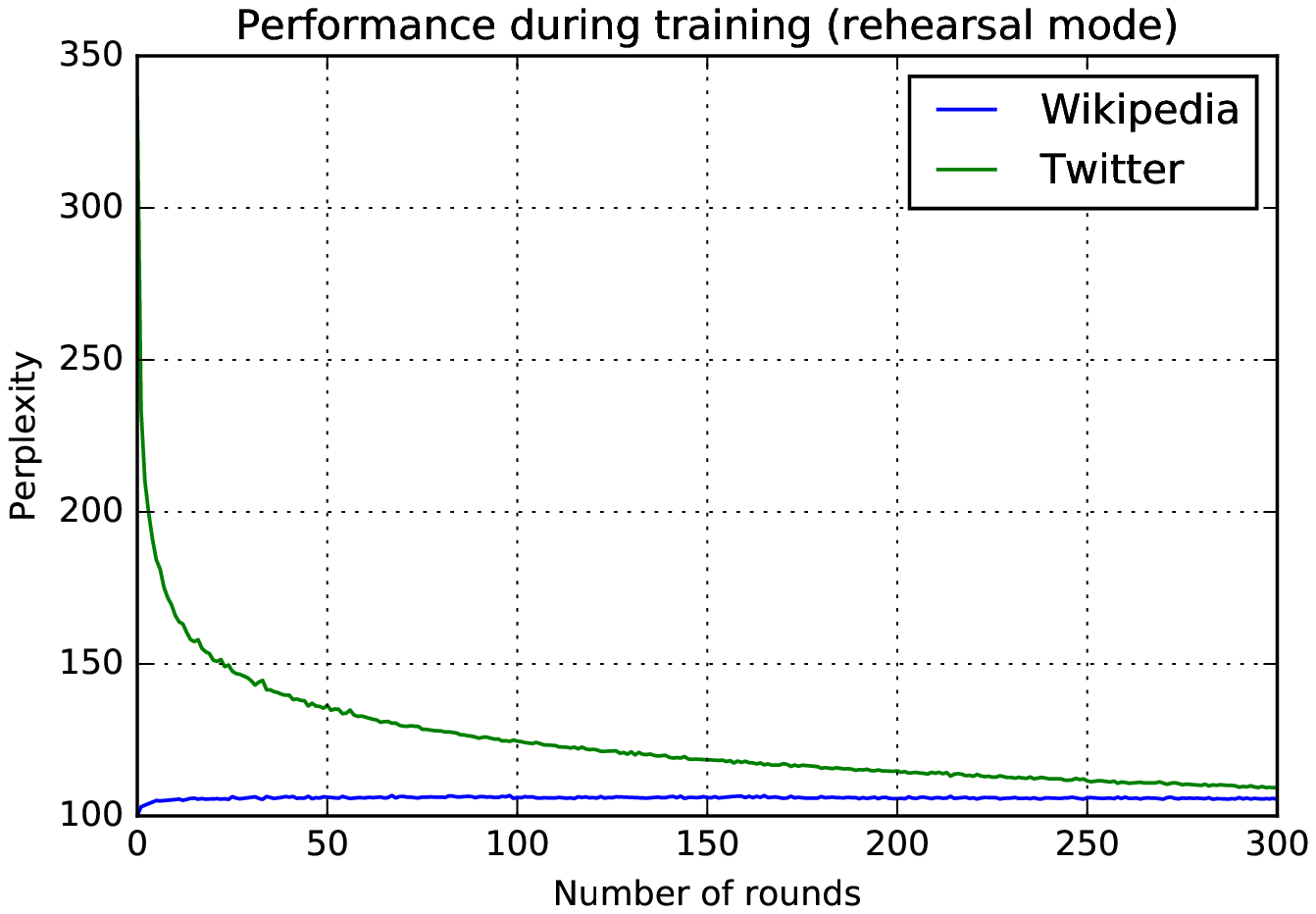} \\ }
\end{minipage}
\hfill
\begin{minipage}[h]{0.49\linewidth}
\center{\includegraphics[width=1.0\linewidth]{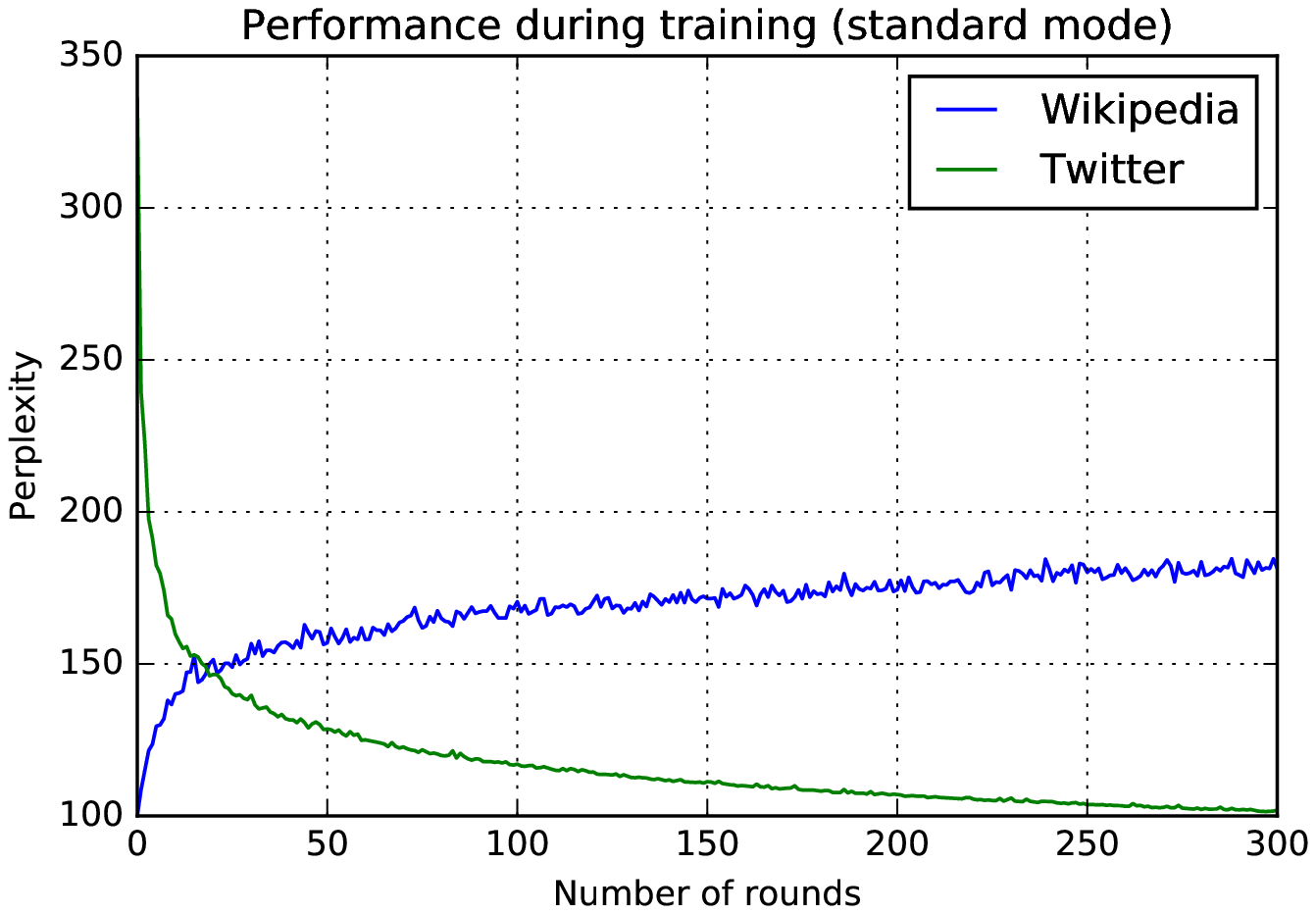} \\ }
\end{minipage}
\caption{Training curves for the general model on the standard English (Wikipedia) and the user data (Twitter) corpora with random rehearsal (left) and without random rehearsal (right).}
\label{pic:Averaging}
\end{figure}

Table \ref{tab:LwF_vs_Rehearsal} summarizes our experiments with on-device model update algorithms. We see that the performance gap between the standard English and the user test sets can be considerably reduced at the cost of performance degradation on the first dataset. The best average perplexity is reached with  the random rehearsal method and $\lambda=0.5$. We believe that the reason of the comparably inferior performance of the LwF method can be explained by the fact that soft labels used by LwF give a poor approximation of the true word distribution of general English so adding a small portion of true data gives better results in terms of knowledge preservation.

To compare model averaging and transfer learning for a server-side model update, we carried out a small experiment with $10$ nodes and $1$ iteration of the server-side update. Each model was trained on a mobile phone with a quad-core mobile CPU with a clock frequency $2.31$ GHz. We used a minibatch size $10$, number of BPTT steps $20$, learning rate $0.75$ and $1$ epoch. Training took approximately $140$ seconds on $20$ kilobytes of text (user-generated and rehearsal data). Note that we used mobile CPU only, so computation time may be reduced by using mobile GPU. Due to absence of the frameworks that make backpropagation on a device possible we had to implement our own training on the phone. After training the updated user models were used for general model update on the server.

For the server-side model update algorithm we also tried the approach proposed in \cite{GenerativeKnowledgeTransferForNeuralLanguageModels}. In this case the new model is trained on the texts generated by its previous round of update. We tested both $1$ generation per epoch and a single time generation before the first epoch. We carried out at most $6$ epochs so we had 1 and 5 cycles of text generation correspondingly.

Results of the experiment are summarized in Table \ref{tab:AVG_vs_TL_ppx}. We saw no significant differences between transfer learning on real and generated data. The difference between transfer learning and averaging is more sound but still not large. At the same time model averaging is much more computationally efficient, as long as transfer learning requires calculation of labels from each of the teacher models. After $300$ rounds of model updates with $3000$ nodes ($10$ nodes per round) we ended up with an $8.7$ absolute gain in KSS on the user data test with only a $0.6$ absolute KSS drop on the standard English data test.

Figure \ref{pic:Averaging} shows that the model starts to perform reasonably well after $100$ rounds of updates. It also shows the importance of rehearsal for preventing catastrophic forgetting. 

\section{General Analysis of Distributed Fine-Tuning}

\subsection{Image classification}

The distributed fine-tuning scheme proposed by us in \ref{sec:Main_Experiments} doesn't have any features that make it applicable only to language modeling -- it can be applied to a wide range of tasks. To illustrate this, we tested the approach for image classification task on MNIST dataset. 

First, we trained a feedforward neural network with three layers consisting of $2000$ units each. We applied dropout with probability $0.2$ after the input layer and dropout with probability $0.5$ after each hidden layer. We used learning rate $0.01$, SGD with momentum equal to $0.9$, gradient clipping at $10$, weight decay $0.0002$ and minibatch size $100$. After $200$ epochs we increased momentum to $0.95$, weight decay to $0.0005$, decreased learning rate to $0.001$ and trained our model for $10$ more epochs. This scheme allowed our model to reach accuracy $98.21\%$ on the test set.

In order to get a new distribution, we applied a fixed random permutation of pixels to all images in the dataset. This trick is often used in recent papers devoted to catastrophic forgetting (e.g. \cite{AnEmpiricalInvestigationofCatastrophicForgettinginGradientBasedNeuralNetworks}). Then, we randomly split the new training set (consisting of permuted images) into $110$ nodes each one containing $500$ images. As in \ref{sec:Main_Experiments}, in each round of model averaging $10$ nodes were involved. Models on a single node were trained for one epoch with learning rate $0.05$ in the first $500$ rounds of averaging, $0.005$ in the next $60$ rounds and $0.001$ in the remaining $90$ rounds.

Table \ref{tab:mnist_results} and Figure \ref{pic:Mnist} demonstrate the results of the described experiment. Final accuracy of fine-tuned models in the table was calculated as mean accuracy in the last $10$ rounds of averaging.

\begin{figure}[h]
\begin{minipage}[h]{0.49\linewidth}
\center{\includegraphics[width=1.0\linewidth]{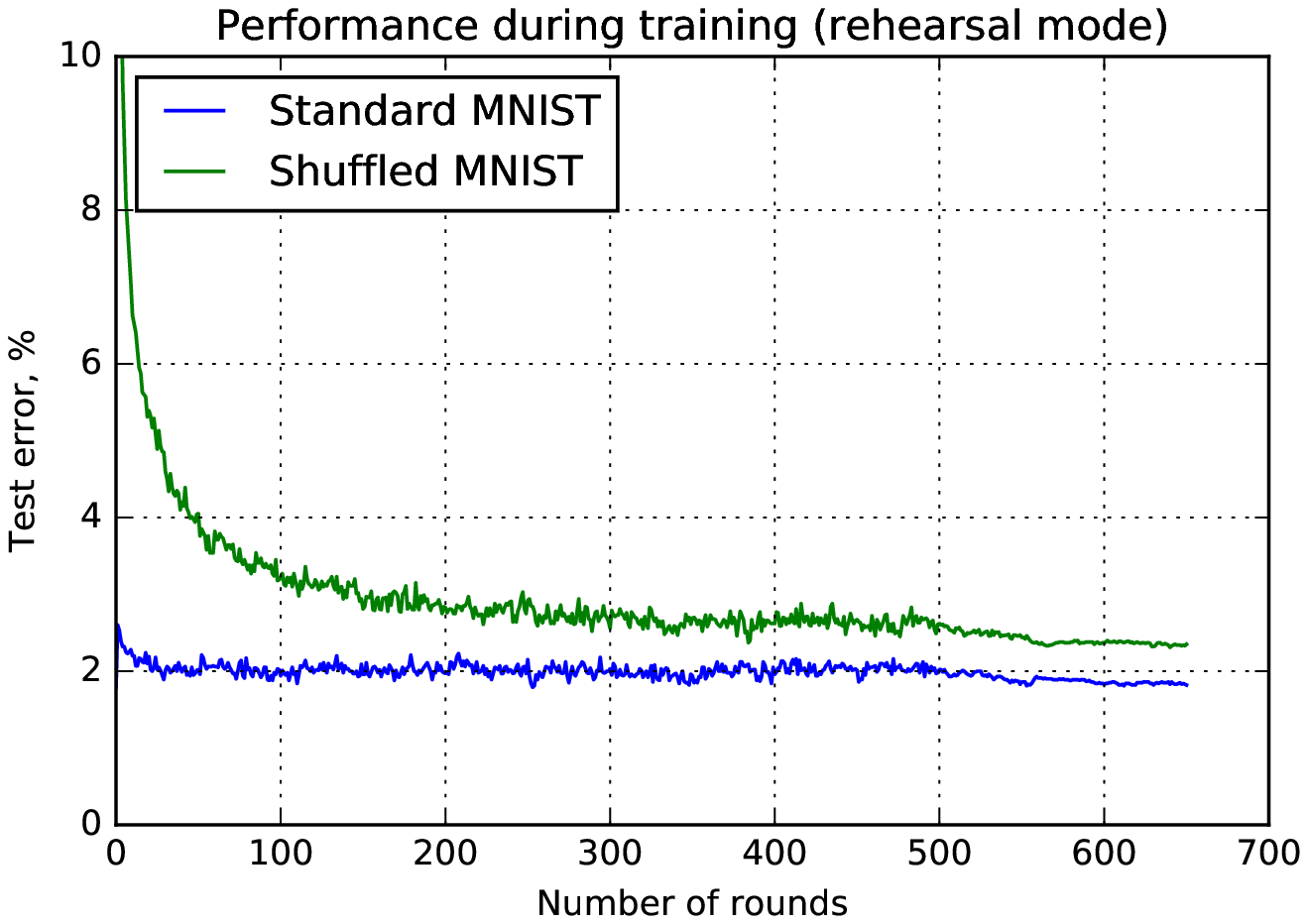} \\ }
\end{minipage}
\hfill
\begin{minipage}[h]{0.49\linewidth}
\center{\includegraphics[width=1.0\linewidth]{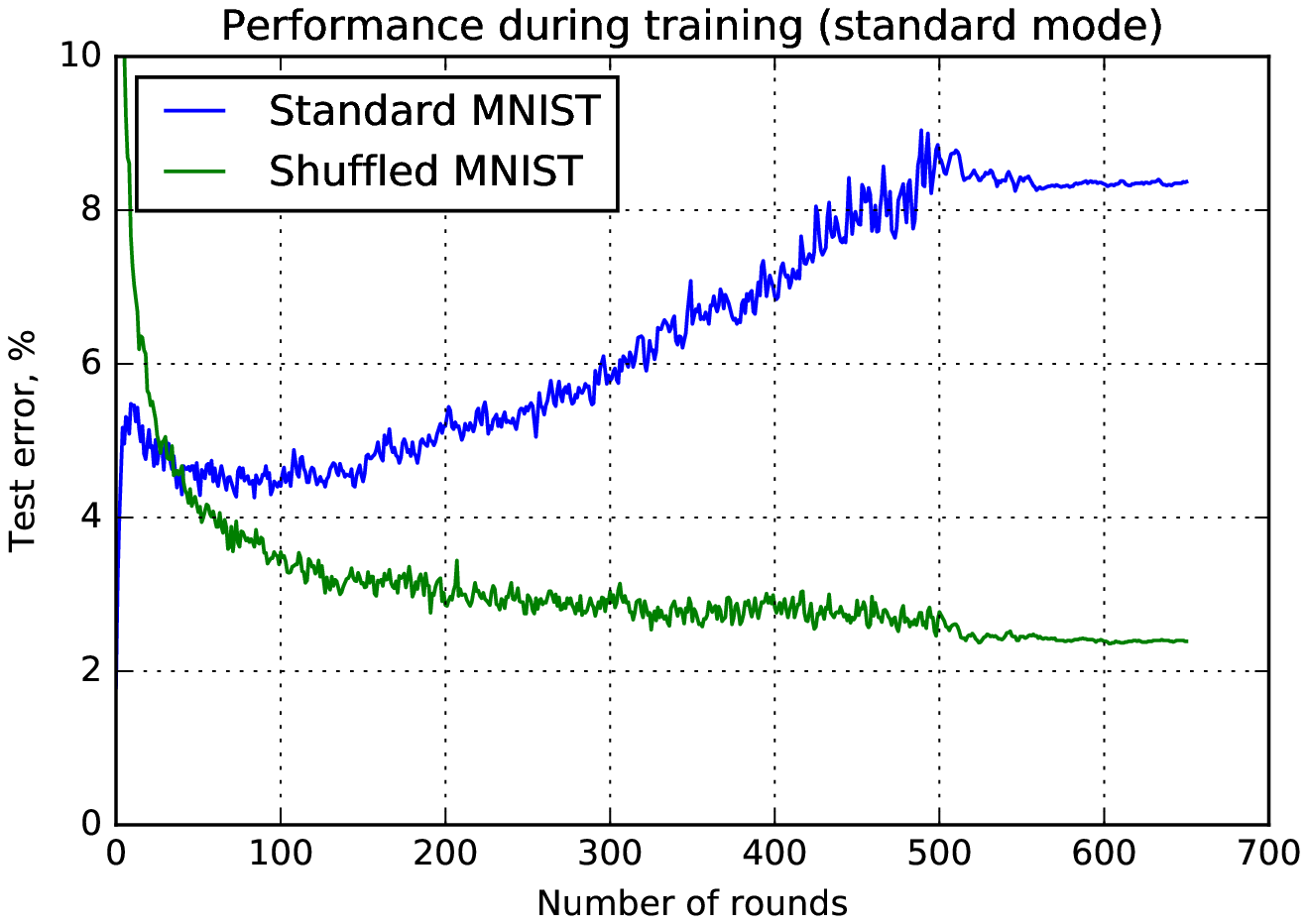} \\ }
\end{minipage}
\caption{Training curves for the general model on the standard MNIST and the shuffled dataset with random rehearsal (left) and without random rehearsal (right).}
\label{pic:Mnist}
\end{figure}

\begin{table}[t]
\caption{Distributed fine-tuning on MNIST}
\begin{center}
\begin{tabular}{|c|c|c|c|}
\hline
Accuracy &Initial model &Fine-tuned with rehearsal &Fine-tuned without rehearsal\\ \hline
On standard MNIST &$98.21\%$ &$98.16\%$ &$91.65\%$\\
On shuffled MNIST &$11.36\%$ &$97.66\%$ &$97.61\%$\\
\hline
\end{tabular}
\end{center}
\label{tab:mnist_results}
\end{table}

We can see that distributed fine-tuning leads to the model that performs well on both the new and the old dataset. As for catastrophic forgetting that takes place if we train only on samples from the new dataset, rehearsal helps to minimize decreasing of the quality on the old dataset -- it is less than $0.1\%$.

\subsection{Communication costs}

There are several strategies that help to make distributed learning communication efficient. The most successful ones can be divided into two classes: 1) strategies that increase computation on nodes thus sending data to the server less frequently (\cite{CommunicationEfficientLearningofDeepNetworksfromDecentralizedData}),  and 2) strategies that transmit only some part of data from devices to the server in a single round of averaging \cite{DeepGradientCompression,GoogleFed}. One of the most impressive results was reached by the Deep Gradient Compression (\cite{DeepGradientCompression}). It belongs to the second class -- its key idea is to send only the most important weight updates obtained during on-device training while accumulating the remaining ones in order to send them when the sum becomes large enough.

It was shown that Deep Gradient Compression method (DGC) allows to send a very small part of weight updates ($0.1\%$) in a single round of averaging without loss in the quality of the model. For language modeling task, the gradient compression ratio of 462x was obtained using gradient accumulation strategy for small updates. However, DGC supposes that in each round of averaging only one user's model update is made for every node while methods from the first class increase computation on nodes to several epochs before model averaging. In our experiments (\ref{sec:Main_Experiments}) we chose to train models on devices for one epoch rather than using DGC-style strategy. As shown in Table \ref{tab:lstm_size}, this results in a total amount of $1.7$Gb of data transmitted from nodes to the server in a single round (this amount certainly depends linearly on the size of the model). We used a classical 2-layer LSTM model from \cite{RecurrentNeuralNetworkRegularization} but there are models that perform similarly or better but have less parameters (e.g. \cite{TyingWeights}, \cite{Orif_TyingWeights}), so in practice we can improve the results shown in Table \ref{tab:lstm_size}.

\begin{table}[t]
\caption{Uploaded data analysis}
\begin{center}
\begin{tabular}{|c|c|c|c|}
\hline
Number of parameters &Size of the model &Nodes per round &Uploaded data per round\\ \hline
$4.57\cdot 10^{7}$ &$174.43$Mb &$10$ &$1.70$Gb\\
\hline
\end{tabular}
\end{center}
\label{tab:lstm_size}
\end{table}

To prove competitiveness of our approach, we made two experiments (see Tables \ref{tab:com_costs_ptb} and \ref{tab:com_costs_cifar}) in the settings presented in \cite{DeepGradientCompression}. In each one we used the same model architecture as suggested in \cite{DeepGradientCompression} and compared two strategies for improving communication efficiency: increasing computation on nodes and DGC.

In the first experiment the models were trained on a popular language modeling benchmark PTB. We trained $2$-layer LSTM with $1500$ units, tied input and output embeddings and variational dropout with probability $0.65$. The results for DGC were taken from \cite{DeepGradientCompression}. As for the first strategy, we trained the model for $28$ rounds. During the first round, a randomly initialized model was trained on the first node, then sent to the second node, trained there, and so on. 
When training on the last (fourth) node was finished, the updated model was sent to all four nodes and the second round started. The remaining $27$ rounds were standard rounds of model averaging. We had to make the first round so specific because we needed to simulate some kind of "pretraining" (which the task itself didn't suggest) in order to make model averaging perform well. Since we had only one training corpus, no rehearsal was applied during training on nodes. The number of training epochs on a node and learning rate decreased from $10$-$20$ and $1.0$ correspondingly in the first rounds to $1$-$3$ and $0.27$ in the last ones. We used minibatch size $20$ and $35$ BPTT steps.

In the second experiment we trained ResNet-110 for image classification task on CIFAR-10. The model architecture was similar to the one suggested in \cite{ResNet} except for the absense of ReLU layer before addition in building blocks. We also used slightly larger weight decay ($0.0002$ instead of $0.0001$ in \cite{ResNet}) because we needed stronger regularization since training data was split into four nodes. Compared to DGC, we increased computation on nodes to several epochs. The number of training epochs on a node and learning rate decreased from $10$-$20$ and $0.1$ correspondingly in the first rounds to $5$ and $0.01$ in the last ones. We used minibatch size $128$ and SGD with momentum equal to $0.9$. The whole training took $19$ rounds, and the first round was again a round of consecutive learning instead of averaging as described above.

\begin{table}[t]
\caption{Communication costs comparison, PTB}
\begin{center}
\begin{tabular}{|c|c|c|c|}
\hline
Communication efficiency improving scheme &Perplexity &Uploaded data &Number of uploads\\ \hline
Several epochs of on-device training &$71.06$ &$21.29$Gb &$112$\\
DGC (\cite{DeepGradientCompression}) &$72.24$ &$21.85$Gb &$5.3\cdot 10^{4}$\\
\hline
\end{tabular}
\end{center}
\label{tab:com_costs_ptb}
\end{table}

\begin{table}[t]
\caption{Communication costs comparison, CIFAR-10}
\begin{center}
\begin{tabular}{|c|c|c|c|}
\hline
Communication efficiency improving scheme &Accuracy &Uploaded data &Number of uploads\\ \hline
Several epochs of on-device training &$93.60\%$ &$504.13$Mb &$76$\\
DGC (\cite{DeepGradientCompression}) &$93.20\%$ &$\geq 710.74$Mb &$6.4\cdot 10^{4}$\\
\hline
\end{tabular}
\end{center}
\label{tab:com_costs_cifar}
\end{table}

The first strategy achieved better perplexity and accuracy with less amount of data sent from nodes to the server compared to DGC. The important thing is that the number of communications for it was much less for DGC (by at least two orders of magnitude). Since communication efficiency involves not only the data that is transmitted from devices to the server but also the time that is necessary to set up connections, we can conclude that increasing computation on nodes perfroms better in terms of communication efficiency than gradient compression methods. This is why we chose the first strategy in our approach. Moreover, in our scheme the data on a device is used only once and can be deleted after the on-device training whereas in DGC and many other distributed learning schemes the data on each device is used many times (once per epoch). 

Certainly, the two classes of strategies for improving communication efficiency are not mutually exclusive -- we can apply DGC or, for example, methods that are described in \cite{GoogleFed} to further reduce communication costs but this is out of the scope of the present paper.

\section{Privacy analysis}

\subsection{Methodology}\label{sec:DifferentialPrivacy}

Our analysis is based on the experimental evaluation of differential privacy. The notion of differential privacy (\cite{DifferentialPrivacySource}) appears naturally in many applications when it comes to estimating of the possibility of privacy violation. In particular, it can be applied to language models trained on private user data.

Loosely speaking, if we have a mechanism that takes some input data and produces some output then differential privacy measures how a single input unit influences the total output. In order to achieve differential privacy, some randomness must be introduced into the mechanism. 

\begin{definition}
\label{def:dif_privacy_general}
\textit{A randomized mechanism $\mathcal{M}$ with domain $\mathcal{D}$ and range $\mathcal{S}$ satisfies $(\varepsilon, \delta)$-differential privacy if for any two inputs $d$, $d' \in \mathcal{D}$ that are adjacent (i.e. differ in one record) and for any subset of outputs $S \subseteq \mathcal{S}$ it holds that:}
$$P(\mathcal{M}(d)\in S) \leq e^{\varepsilon}P(\mathcal{M}(d')\in S)+\delta$$
\end{definition}

In our case $\mathcal{D}$ is the set of all subsets of users and a randomized mechanism $\mathcal{M}(d)$ is a mechanism that generates texts according to a certain language model trained on $d\in\mathcal{D}$. Note that for any $d$ we need to have 
$$\displaystyle\sum_{s\in \mathcal{S}}{P(\mathcal{M}(d)=s)}=1$$ 
Thus it is necessary for $\mathcal{S}$ to be the set of all possible texts of some fixed length rather than the set of all texts of an arbitrary length. In our analysis we will consider only the space of texts containing $10$ words. This is reasonable because it is close to the average length of a sentence in our user data corpus and it seems that if user's privacy is violated then $10$ consequent words are already enough for an adversary to retrieve important information. 

Let us fix two adjacent sets of users $d$ and $d'$, train models $\theta$ and $\theta'$ on them and introduce random variable $c(s)$. It is defined by the expression

\begin{equation}
\label{eq:privacy_cost}
c(s)=\frac{P(s|\theta)}{P(s|\theta')}
\end{equation}

for any $s\in \mathcal{S}$. Since a language model $\Theta$ assigns some positive probability to any sequence of words, $c(s)$ is defined correctly for all $s\in\mathcal{S}$.

Parameter $\delta$ in the Definition \ref{def:dif_privacy_general} stands for the probability that two probabilities $P(s|\theta)$ and $P(s|\theta')$ differ much. This fact is formalized by the following proposition:

\begin{proposition}
\label{prop}
\textit{If $P(s\in\mathcal{S}:c(s)>e^{\varepsilon} \  | \theta)\leq\delta$ then $P(S|\theta) \leq e^{\varepsilon}P(S|\theta')+\delta$ for any $S\subseteq \mathcal{S}$}
\end{proposition}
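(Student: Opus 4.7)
The plan is to carry out the standard ``split-by-privacy-loss'' argument that is used to convert a tail bound on the privacy loss random variable into an approximate-DP guarantee. The hypothesis gives us control of the $\theta$-mass of the ``bad'' outputs where the ratio $c(s)=P(s\mid\theta)/P(s\mid\theta')$ exceeds $e^\varepsilon$, and on the complement the ratio inequality holds pointwise. So the proof should just separate $S$ into these two regions and bound each contribution in the natural way.

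Concretely, I would fix an arbitrary $S\subseteq\mathcal{S}$ and define the partition $S = S_{\text{good}} \sqcup S_{\text{bad}}$, where
\begin{equation*}
S_{\text{good}} = \{s\in S : c(s)\leq e^\varepsilon\}, \qquad S_{\text{bad}} = \{s\in S : c(s) > e^\varepsilon\}.
\end{equation*}
Then I would write $P(S\mid\theta) = P(S_{\text{good}}\mid\theta) + P(S_{\text{bad}}\mid\theta)$ and bound the two pieces separately. For $S_{\text{good}}$, the pointwise inequality $P(s\mid\theta)\leq e^\varepsilon P(s\mid\theta')$ summed (or integrated) over $s\in S_{\text{good}}$ yields $P(S_{\text{good}}\mid\theta)\leq e^\varepsilon P(S_{\text{good}}\mid\theta')\leq e^\varepsilon P(S\mid\theta')$. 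For $S_{\text{bad}}$, monotonicity of probability together with the hypothesis gives $P(S_{\text{bad}}\mid\theta)\leq P(\{s : c(s) > e^\varepsilon\}\mid\theta)\leq\delta$. Adding the two estimates yields the claimed bound.

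There is essentially no obstacle here beyond being careful that $c(s)$ is well defined, which the excerpt already guarantees since the language model assigns strictly positive probability to every finite word sequence, so $P(s\mid\theta')>0$ for every $s\in\mathcal{S}$. The only other thing worth stating explicitly is that $\mathcal{S}$ (texts of exactly $10$ words over a finite vocabulary) is discrete and finite, so all the sums above converge and the partition into $S_{\text{good}}$ and $S_{\text{bad}}$ is unambiguous. No additional measurability or integrability issues arise.
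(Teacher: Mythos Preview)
Your proposal is correct and is exactly the argument the paper gives: the paper defines $B=\{s:c(s)>e^\varepsilon\}$, splits $P(S\mid\theta)=P(S\cap B\mid\theta)+P(S\cap\overline{B}\mid\theta)$, bounds the first term by $P(B\mid\theta)\leq\delta$ and the second by $e^\varepsilon P(S\cap\overline{B}\mid\theta')\leq e^\varepsilon P(S\mid\theta')$. Your $S_{\text{bad}}$ and $S_{\text{good}}$ are precisely $S\cap B$ and $S\cap\overline{B}$, so there is no meaningful difference.
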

\begin{proof}
Let $B=\{s\in\mathcal{S}: c(s)>e^{\varepsilon}\}$. Then for any $S\subseteq \mathcal{S}$

$$P(S|\theta)=P(S\cap B|\theta)+P(S\cap\overline{B}|\theta)\leq P(B|\theta)+e^{\varepsilon}P(S\cap \overline{B}|\theta')\leq \delta+e^{\varepsilon}P(S|\theta')$$

\end{proof}

\begin{figure}[h]
\begin{minipage}[h]{0.49\linewidth}
\center{\includegraphics[width=1.0\linewidth]{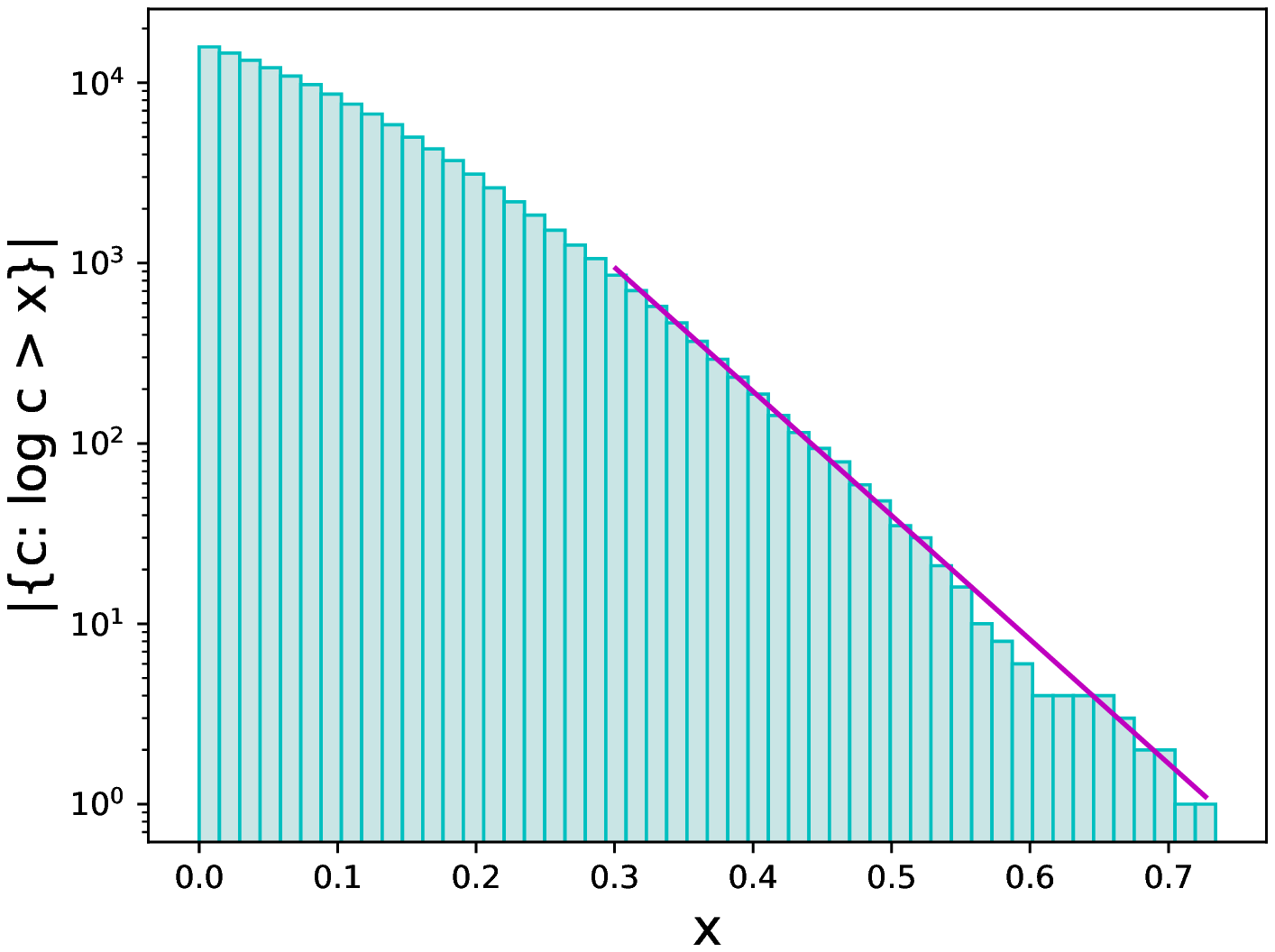} \\ }
\end{minipage}
\hfill
\begin{minipage}[h]{0.49\linewidth}
\center{\includegraphics[width=1.0\linewidth]{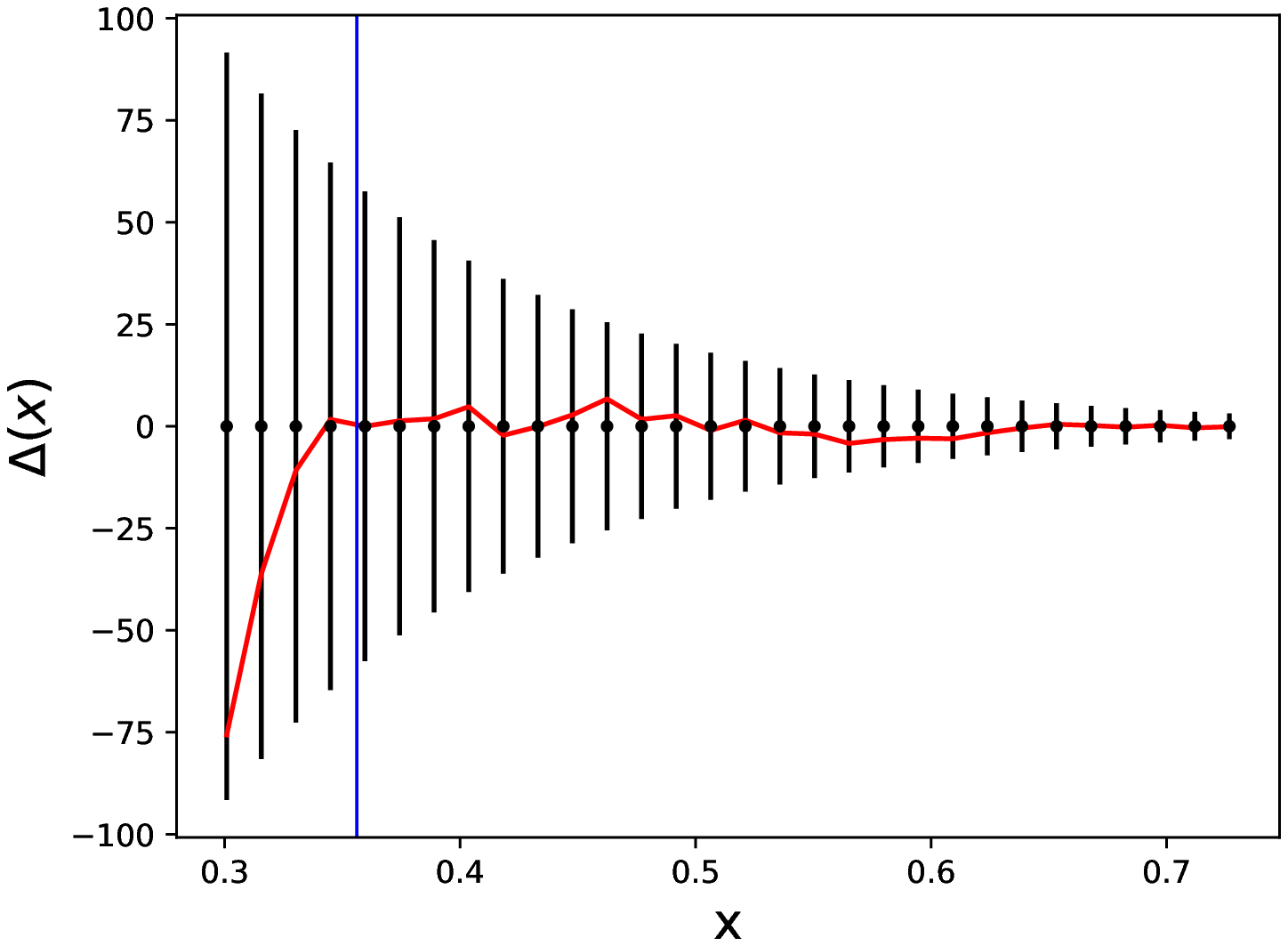} \\ }
\end{minipage}
\caption{\textit{Left:} Empirical histogram of random samples of $c(s)$. Magenta line represents theoretical distribution of the Pareto law with parameters that are estimated on these samples. \textit{Right:} Difference between two distributions on the left plot expressed in number of samples $\Delta(x)$. The parameters of the Pareto law were estimated on the samples that lie in the region $\{\log{c(s)}>0.35\}$ (blue line). Black lines represent standard errors. The left plot is built in logarithmic Y-axis while the right one is built in linear Y-axis.}
\label{pic:Histograms}
\end{figure}

The proposition implies that it is sufficient to estimate the tail of the distribution of $c(s)$ under measure $\mathbb{P}(\cdot|\theta)$. Furthermore, Figure \ref{pic:Histograms} suggests that the tail of the empirical distribution function of the observed variable $c(s)$ has the Pareto distribution. This seems natural as far as words in human language follow Zipf's law which is a discrete analogue of the Pareto distribution. 

To make a confident estimation of differential privacy parameters, we consider 20 different pairs of adjacent sets of users $d$ and $d'$. For each one, we consider a composite null hypothesis that the tail of the random variable $c(s)$ defined in (\ref{eq:privacy_cost}) has the Pareto distribution with the shape parameter equal to its Hill's estimator (\cite{Hill}). Then we apply the Lilliefors test and accept the null hypothesis at a significance level of $5\%$. Quantiles of the Pareto distribution can be written down explicitly thus giving the following formula for estimation of parameters $\varepsilon$ and $\delta$: 
\begin{equation}\label{eq:ed_formula1}
\varepsilon=\frac{1}{\alpha}\log{\frac{C}{\delta}},
\end{equation}
where $\alpha$ and $C$ are parameters of Pareto distribution defined in statistical tests (see Appendix). 

Finally, for a given $\delta$ we take the largest value of $\varepsilon$ amongst all the experiments.

\subsection{Experimental evaluation}
The critical value for the Lilliefors test at $5\%$ significance level is $1.08$. 
In 19 cases out of 20 the Lilliefors test fails to reject the null hypothesis. This conclusion, together with sample visual representation in Figure \ref{pic:Histograms}, allows us to state that the random variable $c(s)$ indeed has tails that decrease like the Pareto distribution tails with quite a big shape parameter. Exact values of KS statistics and Hill's estimators of this parameter for different pairs of users are provided in the Table~\ref{tab:lillie}.
 
Table \ref{tab:dp} shows the results for different values of $\delta$ calculated by formula (\ref{eq:ed_formula1}). In this table the value of $\varepsilon$ is the largest value of this parameter in all 20 experiments. The total number of users is $3\cdot 10^{3}$ so it is reasonable to put $\delta=10^{-4}$. For this choice of $\delta$ parameter $\varepsilon$ equals to $0.67$. It means that our algorithm offers reasonable privacy guarantees (see \citep{Semisupervisedknowledgetransferfordeeplearningfromprivatetrainingdata}). Additionally we provide values of $\varepsilon$ for smaller values of $\delta$.

The results shown in Table \ref{tab:dp} demonstrate that our scheme provides a very good level of privacy protection. However, it is necessary to say that we only aim to produce an empirical estimation of differential privacy which inevitably holds with some high probability but not almost surely (this fact makes our approach close to the so-called \textit{random differential privacy} introduced in \cite{RandomDifferentialPrivacyPaper}). In many machine learning algorithms, the outcome is initially deterministic and some well-known distribution is used to generate noise in order to make the algorithm differentially private (e.g. \cite{Semisupervisedknowledgetransferfordeeplearningfromprivatetrainingdata}). In our mechanism the source of randomness lies inside the neural network and the output distributions can't be written explicitly. This is the reason why we are able to provide only empirical estimations of differential privacy parameters.

\begin{table}[t]
\caption{Results of the Lilliefors test}
\begin{center}
\begin{tabular}{|rc|c|c|c|c|c|c|c|c|c|}
\hline
Experiment \vline &1 &2 &3 &4 &5 &6 &7 &8 &9 &10\\ \hline
$\widehat{\alpha}$ \vline &$15.8$ &$20.9$ &$15.1$ &$16.6$ &$16.5$ &$17.6$ &$14.9$ &$19.2$ &$15.6$ &$15.2$\\
$\widehat{C}$ \vline &$3.25$ &$5.64$ &$2.02$ &$2.48$ &$2.70$ &$4.19$ &$1.47$ &$3.31$ &$1.65$ &$1.83$\\
KS statistic \vline &$0.49$ &$0.91$ &$0.48$ &$0.62$ &$0.83$ &$0.59$ &$\textbf{1.39}$ &$0.41$ &$0.93$ &$0.51$\\
\hline

Experiment \vline &11 &12 &13 &14 &15 &16 &17 &18 &19 &20\\ \hline
$\widehat{\alpha}$ \vline &$16.5$ &$14.4$ &$19.5$ &$18.2$ &$16.2$ &$17.2$ &$17.3$ &$14.8$ &$17.1$ &$20.5$\\
$\widehat{C}$ \vline &$3.00$ &$1.53$ &$3.67$ &$2.20$ &$3.42$ &$2.66$ &$1.68$ &$2.18$ &$2.87$ &$4.60$\\
KS statistic \vline &$0.76$ &$0.89$ &$0.66$ &$0.94$ &$0.67$ &$0.85$ &$0.73$ &$0.97$ &$0.65$ &$0.94$\\ \hline
\end{tabular}
\end{center}
\label{tab:lillie}
\end{table}

\begin{table}[t]
\caption{Differential privacy results}
\label{tab:dp}
\begin{center}
\begin{tabular}{cccc}
$\delta$ \vline &$10^{-4}$ &$10^{-5}$ &$10^{-6}$ \\
$\varepsilon$ \vline &$0.67$ &$0.83$ &$0.99$ \\
\end{tabular}
\end{center}
\end{table}

\section{Conclusion}
We have presented our results in distributed fine-tuning of neural language models. We paid special attention to preventing a catastrophic forgetting of the general language after a model fine-tuning on the user devices. Our experiments showed that the performance of an initial model of the general English on user data can be improved significantly almost without a performance degradation on the standard English training data. We found that a combination of on-device training with random rehearsal and server-side model averaging provides the best performance for such distributed fine-tuning. Users' models were trained for the whole epoch that reduced communication costs while at the same time being quite fast -- it took less than $3$ minutes with a realistic assessment of volume of the available user data. Finally, we provided an experimental evaluation of differential privacy of our method and showed that the method has a reasonable level of differential privacy compared to other solutions. We still have to note that we provided an empirical estimation of differential privacy which holds with some high probability but not almost surely.

\bibliography{poc_paper}
\bibliographystyle{iclr2018_conference}
\appendix

\section{Experimental evaluation of differential privacy for texts}\label{appendix}

One can usually identify that samples come from a power-law distribution by looking at its tail distribution function $\overline{F}(x)=1-F(x)$ where $F(x)$ is a cumulative distribution function (e.g. \cite{PowerLawPaper} describes this method). If $\overline{F}(x)=C/x^{\alpha}$ then $\log{\overline{F}(x)}=\log{C}-\alpha\log{x}$, i.e. the plot should be linear on logarithmic axes.

Figure \ref{pic:Histograms} shows the empirical tail distribution function of the observed variable $c(s)$. We generated $n=3\cdot 10^{4}$ samples (10-word sequences) with the model with parameters $\theta$ that relates to a certain user to get observations of $c(s)$. It can be seen that the tail of $c(s)$ is linear on logarithmic axes like the tail of the Pareto distribution in the region $\{\log{c(s)}>0.35\}$.

So we suppose that $\overline{F}(x)=C/x^{\alpha}$ for big values of $x$. More precisely, we suppose that the distribution function of $c(s)$ for $x>x_{0}$ can be represented by the following formula:

\begin{equation}
\label{eq:dist_function}
F(x)=1-\overline{F}(x_{0})\cdot \left( \frac{x_{0}}{x} \right) ^{\alpha}
\end{equation}

for some $x_{0}$. Parameter $\alpha$ plays the most important role in the further analysis of differential privacy. A common way to estimate it is to use Hill's estimator:

\begin{equation}
\label{eq:HillEstimator}
\widehat{\alpha}=\frac{k}{\sum_{i=1}^{k}{\log{\frac{c^{(i)}_{n}}{c^{(k)}_{n}}}}}
\end{equation}

where $c^{(i)}_{n}$ are the order statistics $c^{(1)}_{n}\geq c^{(2)}_{n}\geq...\geq c^{(k)}_{n}\geq...\geq c^{(n)}_{n}$ and $n$ is the number of samples. This estimator is described in \cite{Hill}. It is a maximum likelihood estimator and it converges in probability to $\alpha$ when $n\to \infty$, \ $k=k(n)\to \infty$ and \ $k(n)/n \to 0$. Note that the estimator depends only on outliers of $c(s)$. This is a very helpful property because it allows us to use it even when we need to estimate only the tail distribution $\overline{F}(x)$ for large values of $x$ rather than the whole distribution. In the experiments we take $k(n)=2\left\lfloor{\sqrt{n}}\right\rfloor$. We put $x_{0}=c^{(k)}_{n}$. For different pairs of adjacent sets of users $d$ and $d'$ values of $\widehat{\alpha}$ vary from $14.4$ to $20.9$. Values of $x_{0}$ vary from $1.33$ to $1.43$, so $\log{x_{0}}$ lies in the interval $[0.28; 0.36]$ in our experiments.

Then we tested the null hypothesis that the cumulative distribution function $F(x)$ of the random variable $c(s)$ is of the Pareto law with the shape parameter $\widehat{\alpha}$ for all $x>x_{0}$. The Kolmogorov-Smirnov (KS) test is often used for this purpose (\cite{GoodnessofFitforHeavyTail} illustrates this approach). Since we tested a composite hypothesis, we needed to use modification of the KS test that is called the Lilliefors test. In the same way as in \cite{GoodnessofFitforHeavyTail} we introduced new random variables $r_{i}=\log{c^{(i)}_{n}/c^{(k)}_{n}}$ for $i=1,..,k$. Since $c^{(i)}_{n}$ are order statistics, we have $c^{(i)}_{n}/c^{(k)}_{n}\geq 1$ for $i=1,..,k$ and it can be shown that these variables are jointly equal in distribution to ordered samples from Pareto law with the shape parameter $\alpha$ and the scale parameter $1$. So, under the null hypothesis $\{r_{i}\}_{1,..,k}$ are exponential with the parameter $\alpha$ and we can apply the Lilliefors test to check whether these samples really come from an exponential distribution with an unknown mean estimated by $\overline{r}=1/\widehat{\alpha}$. 

The method that we use (the Lilliefors test for exponential distributions) is described in \cite{NonparametricStatistics}. Essentially, we calculate a KS statistic for the exponential distribution with a mean that's equal to $1$ and an empirical distribution function $F_{k}(r)$ of the values $\{r_{i}/\overline{r}\}_{1,..,k}$:

\begin{equation}
\label{KSStatistic}
\sqrt{k}\sup_{r\geq 1}{|F_{k}(r)-(1-e^{-r})|}
\end{equation}

This statistic doesn't converge to the Kolmogorov distribution as shown in \cite{Lilliefors}. It converges to the distribution with smaller critical values at the same significance levels because we overfit on the sample data when the estimator $\overline{r}$ is plugged in. We chose a $5\%$ significance level and critical value for it is $1.08$. In 19 cases out of 20 the Lilliefors test failed to reject the null hypothesis at a $5\%$ significance level. Table \ref{tab:lillie} provides exact values obtained during the application of the statistical test.
Relying on these values along with data visualization in \ref{pic:Histograms} we can state that random variable $c(s)$ has tails that decrease like the Pareto distribution tails.

The hypothesis that we accepted suggests that the cumulative distribution function of $c(s)$ is given by the formula (\ref{eq:dist_function}). It means that the tail distribution function for all $x>x_{0}$ is given by 

\begin{equation}
\label{eq:tail_dist_function}
\overline{F}(x)=\frac{\overline{F}(x_{0})x_{0}^{\alpha}}{x^{\alpha}}=\frac{C}{x^{\alpha}}
\end{equation}

We chose $x_{0}=c^{(k)}_{n}$, so $\overline{F(}x_{0})$ is just the ratio $k/n$. Thus, $C$ can be estimated by 

\begin{equation}
\label{eq:c_estimator}
\widehat{C}=\frac{k}{n}\cdot (c^{(k)}_{n})^{\widehat{\alpha}}
\end{equation}

Values of $\widehat{C}$ are given in the Table \ref{tab:lillie}. Finally, from formula (\ref{eq:tail_dist_function}) and proposition \ref{prop} it is easy to derive that ($\varepsilon$, $\delta$)-differential privacy is provided by the values $\varepsilon$, $\delta$ that satisfy

\begin{equation}
\label{eq:ed_formula}
\varepsilon=\frac{1}{\alpha}\log{\frac{C}{\delta}}
\end{equation}
\end{document}